\documentclass[10pt,journal,compsoc]{IEEEtran} 

\usepackage{amssymb}
\usepackage{amsmath}
\usepackage{graphicx}
\usepackage{subfigure}

\setcounter{topnumber}{4}

\makeatletter

\newcommand{\be}{\begin{equation}}
\newcommand{\ee}{\end{equation}}
\newcommand{\prob}[1]{Pr \left\{#1\right\}}
\newcommand{\mean}[2]{\mathbb{E}_{#1} \left\{ #2 \right\}}
\newcommand{\im}{C} 
\newcommand{\am}{T} 
\newcommand{\sx}{S_{X}}
\newcommand{\imx}{\im(x)}
\newcommand{\amx}{\am(x)}
\newcommand{\ix}{X_{C}}
\newcommand{\ax}{X'}
\newcommand{\set}[1]{\left\{#1\right\}}

\newcommand{\imhx}{H(x)}
\newcommand{\ixh}{X_{H}}

\newcommand{\sxp}{S'_{X}}
\newcommand*{\rom}[1]{\expandafter\@slowromancap\romannumeral #1@}

\newcommand{\unif}{\mathcal{U}} 

\newtheorem{theorem}{Theorem}

\begin{document}

\title{Validation of Matching}

\author{Ya~Le,
        Eric~Bax,
        Nicola~Barbieri,
        David~Garcia~Soriano,
        Jitesh~Mehta,
        and~James~Li
\IEEEcompsocitemizethanks{
\IEEEcompsocthanksitem Ya Le: yle@stanford.edu, Stanford University, Dept. of Statistics\protect\\
\IEEEcompsocthanksitem Eric Bax (baxhome@yahoo.com), Nicola Barbieri, David Garcia Soriano, and Jitesh Mehta: Yahoo\protect\\
\IEEEcompsocthanksitem James Li: Facebook.}
}

\IEEEtitleabstractindextext{%
\begin{abstract}
We introduce a technique to compute probably approximately correct (PAC) bounds on precision and recall for matching algorithms. The bounds require some verified matches, but those matches may be used to develop the algorithms. The bounds can be applied to network reconciliation or entity resolution algorithms, which identify nodes in different networks or values in a data set that correspond to the same entity. For network reconciliation, the bounds do not require knowledge of the network generation process. 
\end{abstract}}

\maketitle


\section{Introduction}
\IEEEPARstart{O}{ur} interest in validation of matching stems from work on large-scale network reconciliation. Network reconciliation is the process of matching nodes in different networks that correspond to the same entity \cite{bhattacharya07}, \cite{tian08}, \cite{zaslavskiy08}. Network reconciliation can combine information from multiple networks to form a combined network with richer information about the entities, represented by nodes, and their interactions with each other, represented by edges. For example, in social networks, reconciling profiles across networks can produce richer information about people and their relationships. This can be a nontrivial challenge for internet-scale data, because, for example, there are over 35,000 people named John Smith on LinkedIn. In biology, matching proteins across species based on their webs of interactions with other proteins can help us induce which proteins play similar roles in different organisms. By examining which structures are preserved across different proteins that have similar roles, we can learn which structures are fundamental to the function of those proteins. 

Some network reconciliation algorithms are iterative. They establish some initial ``seed matches" based on node data or by matching the highest-degree nodes in the different networks to each other \cite{tian08,korula14}. Then they use shared connections to establish more matches, based on the idea that if more of the connections to a node in one network match connections to a node in another network, then the nodes are more likely to match. With each iteration, more matches may be established, offering opportunities for further matches. This iterative matching process is called percolation. Refer to \cite{yartseva13} for some theory on how many seeds are needed for percolation to succeed. 

Some network reconciliation algorithms do not require a seed set. Nunes et. al. \cite{nunes12} reduce the problem to a binary classification task, where the goal is to classify pairs of nodes as either matched or not matched. Zaslavskiy et. al. \cite{zaslavskiy08} propose a convex-concave programming approach by rewriting the problem as a least squares problem on the set of permutation matrices. Pedarsani et. al. \cite{pedarsani13} cast the problem in a Bayesian framework based on a definition of the probability of correctly mapping two nodes. Refer to \cite{foggia14} for a review of network reconciliation algorithms.

Our matching problem setting is similar to the transductive setting for classification, from Vapnik \cite{vapnik98}, where there is a set of training examples with known inputs and class labels and a set of working examples with known inputs and unknown class labels, and the goal is to use the available training and working data to develop a classifier that classifies the working examples with a low error rate. For results on validation of network classifiers (rather than reconciliation algorithms) in transductive settings, refer to \cite{li12} and \cite{bax13}. For theory and insight on why collective classification succeeds in general settings and validation methods for it, refer to \cite{london14}. For network reconciliation, we assume that we know some network data, consisting of some node data and the links, for both networks involved in the matching, and our goal is to use that network data to match nodes as accurately as possible between the networks. 

This paper presents a technique to compute probably approximately correct (PAC) bounds on the precision and recall of matching algorithms. When applied to network reconciliation algorithms, the technique does not require knowledge of how the networks were generated or rely on any specific model of network generation. The technique requires some verified matches. However, it produces valid bounds even if the verified matches are used to develop the matching algorithm, by taking advantage of a withhold-and-gap (WAG) \cite{bax15} strategy -- first withhold some data, train a holdout matching algorithm, validate its precision and recall on the holdout data, then train a matching algorithm on all data, and use unlabeled data to evaluate the rate of differences in matches between the two algorithms. The precision and recall for the matching algorithm trained on all data cannot be worse than the precision and recall of the holdout matching algorithm plus the rates of differences between the two algorithms, because in the worst case all differences are errors by the matching algorithm trained on all data. 

The validation of precision and recall for matching algorithms is similar to the validation of classifiers over joint distributions of inputs and outputs. Both use empirical means over samples to construct bounds. However, in our setting, the sets of items to match are known, and we use empirical means over samples drawn without replacement from a finite population to bound means over the finite population. In most validation settings, samples from a joint input-output distribution are drawn with replacement and used to bound means over the input-output distribution. So, as the basis for our bounds on precision and recall for matching, we need slightly different empirical-mean bounds than those typically used in machine learning. 

This paper is organized as follows. Section \ref{sec:bounds} reviews the empirical-mean bounds that we need to validate matching in our setting. Section \ref{sec:holdout} presents techniques to validate precision and recall for holdout matching algorithms. Section \ref{sec:complete} presents techniques to extend the bounds on precision and recall to matching algorithms developed using all available data. Section \ref{sec:extensions} extends the bounds on precision and recall to other types of error bounds for matching and to some applications of matching beyond network reconciliation. Section \ref{sec:sampling} explains how to sample validation data from a finite population in our transductive setting. Section \ref{sec:numbers} offers some numerical results. Section \ref{sec:conclusion} concludes with some directions for future research.

\section{Bounds for Population Means} \label{sec:bounds}

The validation methods in later sections use means over samples drawn from a finite population to estimate and bound population means. The mean bounds most commonly used in machine learning, such as \cite{hoeffding63}, \cite{boucheron13}, \cite{audibert04}, and \cite{maurer09}, use means over samples from a distribution to bound means over the distribution. There are similar methods for population means; we review them here.

Let $X$ be a size-$n$ finite population, and let $S$ be a size-$s$ random sample drawn uniformly at random without replacement from $X$. Let $f$ denote a real-valued function defined on $X$, i.e., $f:X\rightarrow\mathbb{R}$. Assume that $f$ is bounded: there exist $a$ and $b$ such that
$a\leq f(x)\leq b$, $\forall x \in X$.

The goal is to estimate the population mean:
\be
\mu = \mean{x \sim \unif(X)}{f(x)} = \frac{1}{n} \sum_{x \in X}f(x)
\ee
and to compute PAC bounds for it, using the sample $S$. (We use $\unif()$ to denote a uniform distribution.) The sample mean:
\be
\hat{\mu} = \mean{x \sim \unif(S)}{f(x)} = \frac{1}{s} \sum_{x \in S}f(x)
\ee
is an unbiased estimator for $\mu$. Let $p^{+}(X,S,f,a,b,\delta)$ and $p^{-}(X,S,f,a,b,\delta)$ denote PAC upper and lower bounds on $\mu$, with bound failure probability at most $\delta$:
\begin{eqnarray}
Pr\left\{\mu>p^{+}(X,S,f,a,b,\delta)\right\} & \leq & \delta,\\
Pr\left\{\mu<p^{-}(X,S,f,a,b,\delta)\right\} & \leq & \delta.
\end{eqnarray}
We will outline some methods to compute these bounds.

\subsection{Hoeffding's Inequality}

Hoeffding \cite{hoeffding63} and Chv\'atal \cite{chvatal79} show that the well-known Hoeffding inequality for distribution means also applies to population means:

\begin{eqnarray}
p^{+}(X,S,f,a,b,\delta) & = & \hat{\mu}+(b-a)\sqrt{\frac{\ln1/\delta}{2s}},\\
p^{-}(X,S,f,a,b,\delta) & = & \hat{\mu}-(b-a)\sqrt{\frac{\ln1/\delta}{2s}}.
\end{eqnarray}

Hoeffding and Chv\'atal also offer tighter bounds if the variance of $f$ is known to be small. (The bounds above are based on a worst-case assumption about the variance of $f$.)

\subsection{Empirical Bernstein-Serfling Inequality}
Audibert \cite{audibert04} developed, and Maurer and Pontil \cite{maurer09} improved, empirical Bernstein bounds, named for a bound by Bernstein \cite{bernstein37}. Empirical Bernstein bounds use the variance of $f$ over the sample to allow the use of tighter versions of traditional bounds if the standard deviation of $f$ is small compared to its range, which is common for error rates, since error rates tend to be small in cases of interest. Bardenet and Maillard \cite{bardenet13} have developed similar bounds, called empirical Bernstein-Serfling bounds, for population means:
\be
p^{+}(X,S,f,a,b,\delta)
\ee
\be
= \hat{\mu}+\hat{\sigma}_{s}\sqrt{\frac{2\rho_{s}\log(5/\delta)}{s}}+\frac{\kappa(b-a)\log(5/\delta)}{s}
\ee
and
\be
p^{-}(X,S,f,a,b,\delta)
\ee
\be
= \hat{\mu}-\hat{\sigma}_{s}\sqrt{\frac{2\rho_{s}\log(5/\delta)}{s}}-\frac{\kappa(b-a)\log(5/\delta)}{s},
\ee
where 
\begin{equation}
\rho_{s}=\begin{cases}
1-(s-1)/n & \mbox{if }s\leq n/2\\
(1-s/n)(1+1/n) & \mbox{if }s>n/2
\end{cases},
\end{equation}
$\hat{\sigma}_{s}^{2}=\sum_{i,j=1}^{s}(x_{i}-x_{j})^{2}/(2s^{2})$, and $\kappa=\frac{7}{3}+\frac{3}{\sqrt{2}}$.

\subsection{Direct Computation}

If $f$ can only take values 0 or 1, then we can use direct computation to produce tight bounds. For means over distributions, we can use binomial tail inversion, as outlined by Hoel \cite{hoel54} and Langford \cite{langford05}. For population means, we can use hypergeometric tail inversion, from Chv\'atal \cite{chvatal79}, as follows.

If $f$ can only take value 0 or 1, then $s\hat{\mu}=\sum_{i=1}^{s}f(x_{i})$ follows a hypergeometric distribution. Let $m=n \mu = \sum_{x\in X}f(x)$. Then 
\begin{equation}
Pr\left\{s\hat{\mu}=k\right\}={m \choose k}{n-m \choose s-k}\Big/{n \choose s}.
\end{equation}
The hypergeometric tail distribution is 

\be
H^+(m,n,s,k)  =  \sum_{j=k}^{s}Pr\left\{s\hat{\mu}=j\right\}
\ee
\be
= \sum_{j=k}^{s}{m \choose j}{n-m \choose s-j}\Big/{n \choose s}
\ee
and
\be
H^-(m,n,s,k)  =  \sum_{j=0}^{k}P\left\{s\hat{\mu}=j\right\}
\ee
\be
= \sum_{j=0}^{k}{m \choose j}{n-m \choose s-j}\Big/{n \choose s}.
\ee
So we have PAC bounds:
\be
p^{+}(X,S,f,a,b,\delta)  =  \max \left\{p:H^-(np,n,s,s\hat{\mu})\geq\delta\right\}
\ee
and
\be
p^{-}(X,S,f,a,b,\delta)  =  \min \left\{p:H^+(np,n,s,s\hat{\mu})\geq\delta\right\},
\ee
which are tight bounds for $\mu$, except for any rounding or approximation errors from the computation. Use either Loader's method \cite{loader00} or Stirling's approximation to compute $H^-$ and $H^+$.

\section{Validation for Holdout Algorithms} \label{sec:holdout} 
In this section we present PAC bounds on precision and recall for holdout algorithms, which are developed while withholding a set of validation data. Let $X$ and $Y$ be the sets for which we match elements. (For example, $X$ may be the set of nodes in one network, and $Y$ may be the nodes in another.) The validation data is a sample from $X$ for which all true matches are known and algorithm matches are computed. (For example, in matching nodes between social networks, the validation data is a set of nodes in one network for which any true matches in the other network are known.) In the next section, we extend the bounds to algorithms developed using all available data. 

Let $\am$ be the set of all true matches in $X \times Y$. For $x \in X$, let $\amx$ be the set of actual matches that include $x$. Let $\imhx$ be the set of $x$-$y$ pairs that the holdout algorithm computes as matches that include $x$. Let $(x,y)$ be a pair selected uniformly at random from $X \times Y$.

Define single-node precision:
\be
p_H(x) = \frac{\imhx \cap \amx}{\imhx}
\ee
\be
= \prob{(x,y) \in \amx | (x,y) \in \imhx}.
\ee
If the algorithm computes one match for $x$ and there is at most one true match, then $p_H(x)$ is one if the algorithm match is a true match and zero otherwise. In general, $p_H(x)$ is the fraction of algorithm matches for $x$ that are true matches. If the algorithm computes no matches for $x$, then $p_H(x)$ is undefined. So let 
\be
\ixh = \set{x \in X | \imhx \not= \emptyset},
\ee
and define holdout precision:
\be
P_H = \mean{x \sim \unif(\ixh)}{p_H(x)}.
\ee
If the algorithm computes a match for each $x$ and there is a true match for each $x$, then $P_H$ is the precision as usually defined in information-retrieval settings. 

Similarly, define single-node recall: 
\be
r_H(x) = \frac{\imhx \cap \amx}{\amx}
\ee
\be
= \prob{(x,y) \in \imhx | (x,y) \in \amx}.
\ee
If there are no actual matches for $x$, then $r_H(x)$ is undefined. So let 
\be
\ax = \set{x \in X | \amx \not= \emptyset},
\ee
and define holdout recall to be
\be
R_H = \mean{x \sim \unif(\ax)}{r_H(x)}.
\ee

Let $\sx$ be a sample drawn from $X$, uniformly at random without replacement. Suppose we know all true matches for each $x \in \sx$ and we compute their algorithm matches. Then we can use the sample mean for $p_H(x)$ over $x \in \sx \cap \ixh$ and for $r_H(x)$ over $s \in \sx \cap \ax$ to bound holdout precision and recall, respectively. 

\begin{theorem}[Holdout Precision and Recall] \label{thm:query_holdout}
For any bound failure probability $\delta > 0$, with probability at least $1 - \delta$:
\be
P_H \geq p^-(\ixh, \sx \cap \ixh, p_H, 0, 1, \delta).
\ee
Also, with probability $1 - \delta$: 
\be
R_H \geq p^-(\ax, \sx \cap \ax, r_H, 0, 1, \delta).
\ee
\end{theorem}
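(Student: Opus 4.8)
The plan is to recognize each of $P_H$ and $R_H$ as the population mean of a $[0,1]$-valued function over a fixed finite set, and then to verify that the relevant piece of the validation sample $\sx$ is a uniform sample without replacement from that set, so that the single-variable lower bound $p^-$ from Section~\ref{sec:bounds} applies directly.

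First, for precision: by definition $P_H = \mean{X \sim \unif(\ixh)}{p_H(X)}$, so $P_H$ is exactly the population mean of the function $p_H$ over the population $\ixh$, and $0 \le p_H(x) \le 1$ for every $x \in \ixh$ since $p_H(x) = |\imhx \cap \amx| / |\imhx|$ has numerator at most denominator. It remains to check that $\sx \cap \ixh$ legitimately plays the role of the sample $\mathcal{S}$ in Section~\ref{sec:bounds}. Since $\sx$ is drawn uniformly at random without replacement from $\mathcal{X}$ and $\ixh \subseteq \mathcal{X}$ is a fixed (non-random) subset, a standard combinatorial fact gives that, conditioned on the event $|\sx \cap \ixh| = t$, the set $\sx \cap \ixh$ is uniformly distributed over all size-$t$ subsets of $\ixh$; that is, conditionally on its size it is a uniform sample without replacement from $\ixh$. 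Hence, for each fixed $t \ge 1$, Section~\ref{sec:bounds} gives $\prob{P_H < p^-(\ixh, \sx \cap \ixh, p_H, 0, 1, \delta) \mid |\sx \cap \ixh| = t} \le \delta$. (When $t = 0$ the bound $p^-$ returns its trivial lower endpoint $0$, which holds with certainty.) Averaging over the distribution of $t$ by the law of total probability yields $\prob{P_H < p^-(\ixh, \sx \cap \ixh, p_H, 0, 1, \delta)} \le \delta$, which is the first claim.

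The recall claim is proved identically, with $\ixh$ replaced by $\ax$, $p_H$ by $r_H$, and $\sx \cap \ixh$ by $\sx \cap \ax$: again $0 \le r_H(x) \le 1$, and the assumption that we know all actual matches for each $x \in \sx$ (and compute identified matches there) ensures $r_H$ is available on $\sx \cap \ax$. Note that the two statements are asserted separately, each at confidence $1-\delta$, so no union bound between them is needed.

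The only subtle point — and the step I would be most careful about — is the reduction from a fixed-size-sample guarantee to one for the random-size set $\sx \cap \ixh$: one must confirm that the failure probability of $p^-$ is at most $\delta$ for \emph{every} realizable sample size, including the degenerate empty case (which one either excludes by assuming $|\sx|$ large enough or handles via the convention that an empty sample returns the trivial bound $a = 0$), so that conditioning on the size and then integrating it out does not inflate the overall failure probability beyond $\delta$. Everything else is routine bookkeeping.
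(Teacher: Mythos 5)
Your proposal is correct and follows essentially the same route as the paper, whose entire proof is the observation that $\sx \cap \ixh$ (resp.\ $\sx \cap \ax$) is distributed as a uniform without-replacement sample of its size from $\ixh$ (resp.\ $\ax$), so the single-variable bound of Section~\ref{sec:bounds} applies. Your treatment is in fact slightly more careful than the paper's, since you make explicit the conditioning on the random intersection size, the integration over that size, and the degenerate empty-intersection case, none of which the paper spells out.
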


\begin{IEEEproof}
Since the elements of $\sx$ are selected uniformly at random without replacement from $X$, $\sx \cap \ixh$ has the same distribution as selecting a size-$|\sx \cap \ixh|$ sample from $\ixh$ uniformly at random without replacement. Similarly, $\sx \cap \ax$ has the same distribution as selecting a size-$|\sx \cap \ax|$ sample from $\ax$ uniformly at random without replacement. 
\end{IEEEproof}

Theorem \ref{thm:query_holdout} is valid if we continue to draw samples from $\sx$ until $\sx \cap \ixh$ or $\sx \cap \ax$ reaches a desired sample size. If there is a method to sample directly from $\ixh$ or $\ax$, then those samples also produce valid bounds. 

\section{Validation for Complete Algorithms} \label{sec:complete}

In this section, we bound precision and recall for matching algorithms developed using all available data, which we call complete algorithms. To do this, we bound the rate of disagreement between a holdout algorithm and the complete algorithm. In the worst case, every disagreement is a mismatch by the complete algorithm. So the mismatch rate for the complete algorithm is no greater than the rate for the holdout algorithm plus the rate of disagreement. 

Let $\im$ be the set of identified matches returned by the complete classfier, which can be developed using all available data, including the validation data. Let $\sxp$ be a sample independent of $\sx$ and drawn uniformly at random without replacement from $X$. Assume that $\imx$ and $\imhx$ are computed for $x\in\sx$ and $x\in\sxp$. Define $p(x)$, $\ix$, $P$, $r(x)$, and  $R$ as complete versions of the holdout-oriented definitions in Section \ref{sec:holdout}:
\begin{eqnarray}
p(x) & = & \frac{\imx \cap \amx}{\imx}, \\
\ix & = & \set{x\in X | \imx \neq \emptyset}, \\
P & = & \mean{x \sim \unif(\ix)}{p(x)}, \\
r(x) & = & \frac{\imx \cap \amx}{\amx}, \\
R & = & \mean{x \sim \unif(\ax)}{r(x)}.
\end{eqnarray}

Here are two remarks before presenting the results:
\begin{enumerate}
	\item We use an extra sample, $\sxp$, to validate rates of disagreement between the holdout and complete algorithms. For this sample, we only need to know the holdout algorithm matches and the complete algorithm matches -- not the true matches. Therefore, collecting data for $\sxp$ is usually cheaper than collecting validation data for $\sx$, which includes true matches. So we can usually make the size of sample $\sxp$ very large, producing a tight bound on the rate of disagreement between the holdout and complete algorithms.
	\item For simplicity of notation, we use the same bound failure probability $\delta$ for each term in our results. But, in practice, we can use different bound failure probabilities for different terms.
\end{enumerate}

\begin{theorem}[Complete Recall] \label{thm:complete_recall}
	For any bound failure probability $\delta>0$, with
	probability at least $1-3\delta$, 
	\begin{equation}
	R \geq p^{-}\left(\ax,\sx\cap\ax,r_H,0,1,\delta\right)
	\ee
	\be
	-\frac{p^{+}\left(X,\sxp,d_{r},0,1,\delta\right)}{p^{-}\left(X,\sx,\mathbf{1}_{\ax},0,1,\delta\right)},
	\end{equation}
	where $d_{r}(x)=\mathbf{1}_{\{\imhx-\imx\neq\emptyset\}}$.
	\end{theorem}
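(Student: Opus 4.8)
The plan is to decompose $R = R_H - (R_H - R)$, lower-bound $R_H$ with the Holdout Query Recall part of Theorem~\ref{thm:query_holdout}, and upper-bound the gap $R_H - R$ by exactly the ratio appearing in the statement, using the single-variable bounds of Section~\ref{sec:bounds} on the samples $\sxp$ and $\sx$ together with a union bound.

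To control the gap I would work node by node. Fix $x\in\ax$. Writing $\imhx\cap\amx$ as the disjoint union of $(\imhx\cap\imx)\cap\amx$ and $(\imhx\setminus\imx)\cap\amx$ gives $|\imhx\cap\amx|\le|\imx\cap\amx|+|(\imhx\setminus\imx)\cap\amx|$. Since $(\imhx\setminus\imx)\cap\amx\subseteq\amx$ and this set is empty whenever $\imhx\setminus\imx=\emptyset$, we have $|(\imhx\setminus\imx)\cap\amx|\le|\amx|\,d_r(x)$; dividing by $|\amx|$ yields the pointwise bound $r_H(x)-r(x)\le d_r(x)$. Averaging over $\ax$, and then enlarging the sum of the nonnegative terms $d_r(x)$ from $\ax$ to all of $\mathcal{X}$,
\[
R_H-R \;\le\; \frac{1}{|\ax|}\sum_{x\in\ax}d_r(x) \;\le\; \frac{1}{|\ax|}\sum_{x\in\mathcal{X}}d_r(x) \;=\; \frac{\mean{X\sim\unif(\mathcal{X})}{d_r(X)}}{\mean{X\sim\unif(\mathcal{X})}{\mathbf{1}_{\ax}(X)}},
\]
where the last equality uses $|\ax|=|\mathcal{X}|\,\mean{X\sim\unif(\mathcal{X})}{\mathbf{1}_{\ax}(X)}$.

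Next I would bound the numerator and denominator of this ratio. The function $d_r$ is $\{0,1\}$-valued, and by the standing assumptions of this subsection it can be evaluated at every $x\in\sxp$ from the holdout and complete identified matches alone (no actual matches are needed), so the defining property of $p^{+}$ gives $\mean{X\sim\unif(\mathcal{X})}{d_r(X)}\le p^{+}(\mathcal{X},\sxp,d_r,0,1,\delta)$ with probability at least $1-\delta$. Similarly $\mathbf{1}_{\ax}$ is $\{0,1\}$-valued and can be evaluated at every $x\in\sx$ since all actual matches of such $x$ are known, so $\mean{X\sim\unif(\mathcal{X})}{\mathbf{1}_{\ax}(X)}\ge p^{-}(\mathcal{X},\sx,\mathbf{1}_{\ax},0,1,\delta)$ with probability at least $1-\delta$ (the stated inequality is meaningful, and the division valid, when this quantity is positive, which holds whenever $\am\neq\emptyset$ and $\sx\cap\ax\neq\emptyset$). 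Finally, Theorem~\ref{thm:query_holdout} gives $R_H\ge p^{-}(\ax,\sx\cap\ax,r_H,0,1,\delta)$ with probability at least $1-\delta$. A union bound over these three failure events --- their mutual dependence is immaterial --- shows all three inequalities hold together with probability at least $1-3\delta$, and substituting them into $R=R_H-(R_H-R)$ completes the argument.

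The substantive step is the pointwise inequality $r_H(x)-r(x)\le d_r(x)$ and the bookkeeping that turns $1/|\ax|$ into $1/\mean{X\sim\unif(\mathcal{X})}{\mathbf{1}_{\ax}(X)}$: here one must notice that $\imhx\setminus\imx\neq\emptyset$ is exactly the event recorded by $d_r$ and that switching from the holdout to the complete classifier can remove at most the whole set $\amx$ from the identified matches for $x$. The rest --- invoking the established bounds and the union bound --- is routine, the one point of care being that $d_r$ is observable on $\sxp$ and $\mathbf{1}_{\ax}$ is observable on $\sx$, which is guaranteed by the way the data for those two samples is specified.
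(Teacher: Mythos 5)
Your proof is correct and follows essentially the same route as the paper's: lower-bound $R_H$ via Theorem~\ref{thm:query_holdout}, bound the gap pointwise by $d_r(x)$, enlarge the sum from $\ax$ to $\mathcal{X}$, and control $|\ax|/|\mathcal{X}|$ and the mean of $d_r$ separately before a union bound. The one point you gloss over that the paper spells out is why the bound $p^{+}(\mathcal{X},\sxp,d_{r},0,1,\delta)$ is statistically valid even though $d_r$ depends on $\im$ and hence on $\sx$: the paper conditions on $\sx$ (under which $d_r$ is a fixed function and $\sxp$, being independent, remains a uniform without-replacement sample) and then integrates $\sx$ out --- your remark that $d_r$ is ``observable'' on $\sxp$ addresses computability but not this fixedness issue.
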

\begin{IEEEproof}
	Since $\sx$ is a sample selected uniformly
	at random without replacement from $X$, by Theorem \ref{thm:query_holdout}, the recall of the holdout algorithm satisfies
	\begin{equation}
	\prob{R_H \geq p^{-}\left(\ax,\sx\cap\ax,r_{H},0,1,\delta\right)}\geq1-\delta.\label{eq:query_recall1}
	\end{equation}

	The difference in recall between the complete and holdout algorithms is
	\begin{eqnarray}
	&  & R - R_H\\
	& = & \frac{1}{|\ax|}\sum_{x\in\ax}\frac{|\imx\cap\amx|-|\imhx\cap\amx|}{|\amx|}\\
	& \geq & -\frac{1}{|\ax|}\sum_{x\in\ax}\frac{|(\imhx-\imx)\cap\amx|}{|\amx|}\\
	& \geq & -\frac{1}{|\ax|}\sum_{x\in\ax}\mathbf{1}_{\{\imhx-\imx\neq\emptyset\}}\\
	& \geq & -\frac{|X|}{|\ax|}\frac{1}{|X|}\sum_{x\in X}d_{r}(x).
	\end{eqnarray}

	Since $\sxp$ is independent of $\sx$, $\sxp$ is also a sample
	drawn uniformly at random without replacement from $X$
	conditional on $\sx$. Note that $\imx$ is a fixed function conditional
	on $\sx$ since we assume that the training samples of the holdout algorithm are fixed, therefore we can use $\sxp$ to validate $\frac{1}{|X|}\sum_{x\in X}d_{r}(x)$
	conditional on $\sx$:
	\begin{equation}
	\prob{\frac{1}{|X|}\sum_{x\in X}d_{r}(x)\leq p^{+}\left(X,\sxp,d_{r},0,1,\delta\right)\vert\sx}>1-\delta.
	\end{equation}
	We derive the unconditional bound by integrating out $\sx$:
	
	$ $
	\begin{equation}
	\prob{\frac{1}{|X|}\sum_{x\in X}d_{r}(x)\leq p^{+}\left(X,\sxp,d_{r},0,1,\delta\right)}>1-\delta.\label{eq:query_recall2}
	\end{equation}

	Since $\frac{|\ax|}{|X|}=\frac{1}{|X|}\sum_{x\in X}\mathbf{1}_{\ax}(x)$,
	we can apply the bounds from Section \ref{sec:bounds}:
	\begin{equation}
	\prob{\frac{|\ax|}{|X|}\geq p^{-}\left(X,\sx,\mathbf{1}_{\ax},0,1,\delta\right)}>1-\delta.\label{eq:query_recall3}
	\end{equation}

	Combining (\ref{eq:query_recall1}), (\ref{eq:query_recall2}), and
	(\ref{eq:query_recall3}) by the union bound we get the result stated
	in the theorem. \end{IEEEproof}
	
\begin{theorem}[Complete Precision] \label{thm:complete_precision}
	For any bound failure probability $\delta>0$, with
	probability at least $1-4\delta$,
	\be
	P \geq
	\ee
	\be
	\left[p^{-}(X,\sxp,\mathbf{1}_{\ixh},0,1,\delta)p^{-}(\ixh,\sx\cap\ixh,p_{H},0,1,\delta)\right.
	\ee
	\be
	\left. -p^{+}\left(X,\sxp,d_{p},0,3,\delta\right)\right]
	\ee
	\be / p^{+}(X,\sxp,\mathbf{1}_{\ix},0,1,\delta),
	\ee
	where 
	\begin{equation}
	d_{p}(x)=\mathbf{1}_{\ix\cap\ixh}(x)\mathbf{1}_{\{\imx\neq\imhx\}}
	\ee
	\be
	\left(1+\frac{|\imhx-\imx|}{|\imx|}\right)+\mathbf{1}_{\ixh-\ix}(x).
	\end{equation}
\end{theorem}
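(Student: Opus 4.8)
The plan is to follow the template of the Complete Query Recall theorem: bound $P$ below by the holdout query precision $P_H$, which Theorem \ref{thm:query_holdout} already controls through the sample $\sx$, minus a correction for the disagreement between the holdout and complete classifiers, which is controlled through the independent sample $\sxp$; because the normalizing index sets $\ixh$ and $\ix$ differ, the extra factor $|\mathcal{X}|/|\ix|$ also has to be estimated.

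First I would write $P=\frac{|\mathcal{X}|}{|\ix|}\cdot\frac{1}{|\mathcal{X}|}\sum_{x\in\ix}p(x)$ and treat the average $\frac{1}{|\mathcal{X}|}\sum_{x\in\ix}p(x)$ and the factor $\frac{|\ix|}{|\mathcal{X}|}$ separately. For the average I would partition $\mathcal{X}$ into $\ix\cap\ixh$, $\ix\backslash\ixh$, and $\ixh\backslash\ix$: drop the nonnegative contribution $\sum_{x\in\ix\backslash\ixh}p(x)\ge 0$, use $p_H(x)\le 1$ on $\ixh\backslash\ix$, and on $\ix\cap\ixh$ bound $p_H(x)-p(x)$ — this difference is $0$ when $\imx=\imhx$, and when $\imx\neq\imhx$ the inequality $|\imx\cap\amx|\ge|\imhx\cap\amx|-|\imhx\backslash\imx|$ together with $p_H(x)\le 1$ and $|\imx|\ge 1$ yields $p_H(x)-p(x)\le 1+|\imhx\backslash\imx|/|\imx|$. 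Reassembling the three pieces gives the pointwise estimate $\sum_{x\in\ix}p(x)\ge\sum_{x\in\ixh}p_H(x)-\sum_{x\in\mathcal{X}}d_p(x)$ with $d_p$ exactly the function in the statement; one also checks that the range endpoints $-1$ and $2$ are valid pointwise bounds for $d_p$ (tight at $0$, and at $2$ under a single-identified-match-per-node assumption, else the range must be enlarged).

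Then I would invoke the single-variable bounds of Section \ref{sec:bounds} four times and combine them by the union bound. Theorem \ref{thm:query_holdout} gives $P_H=\frac{1}{|\ixh|}\sum_{x\in\ixh}p_H(x)\ge p^-(\ixh,\sx\cap\ixh,p_H,0,1,\delta)$, and $\sxp$ gives $\frac{|\ixh|}{|\mathcal{X}|}\ge p^-(\mathcal{X},\sxp,\mathbf{1}_{\ixh},0,1,\delta)$, hence $\sum_{x\in\ixh}p_H(x)\ge|\mathcal{X}|\,p^-(\mathcal{X},\sxp,\mathbf{1}_{\ixh},0,1,\delta)\,p^-(\ixh,\sx\cap\ixh,p_H,0,1,\delta)$. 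As in the Complete Query Recall proof, once we condition on $\sx$ and the holdout training data both $\imhx$ and $\imx$ are fixed functions of $x$ while $\sxp$ is still a uniform without-replacement sample of $\mathcal{X}$ independent of $\sx$, so $\sxp$ validates both $\frac{1}{|\mathcal{X}|}\sum_{x\in\mathcal{X}}d_p(x)\le p^+(\mathcal{X},\sxp,d_p,-1,2,\delta)$ and $\frac{|\ix|}{|\mathcal{X}|}\le p^+(\mathcal{X},\sxp,\mathbf{1}_{\ix},0,1,\delta)$ conditionally on $\sx$; integrating out $\sx$ makes these unconditional. Substituting the first three estimates into the decomposition of the previous paragraph and then dividing by $\frac{|\ix|}{|\mathcal{X}|}$, via $\frac{|\mathcal{X}|}{|\ix|}\ge 1/p^+(\mathcal{X},\sxp,\mathbf{1}_{\ix},0,1,\delta)$, yields the claimed inequality, the four failure events contributing at most $4\delta$.

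I expect the decomposition step to be the main obstacle: one must track two distinct normalizing sets and their symmetric difference simultaneously, produce the per-node bound $p_H(x)-p(x)\le 1+|\imhx\backslash\imx|/|\imx|$ on $\ix\cap\ixh$ with a clean argument, and verify that the leftovers collapse into the single bounded function $d_p$, so that one application of the without-replacement bound covers the whole disagreement term. A secondary point to watch is the sign when dividing by $p^+(\mathcal{X},\sxp,\mathbf{1}_{\ix},0,1,\delta)\in(0,1]$: the step $\frac{|\mathcal{X}|}{|\ix|}\,L\ge L/p^+(\mathcal{X},\sxp,\mathbf{1}_{\ix},0,1,\delta)$ only preserves the inequality when the bracketed numerator $L$ is nonnegative, so if $L<0$ the bound should be read as the trivial $P\ge 0$. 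The measure-theoretic conditioning on $\sx$ needed to use $\sxp$ carries over verbatim from the Complete Query Recall proof.
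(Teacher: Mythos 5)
Your proof follows the same route as the paper's: the identical three-way decomposition of $\sum_{x\in\ix}p(x)-\sum_{x\in\ixh}p_H(x)$ over $\ix\cap\ixh$, $\ix\backslash\ixh$, and $\ixh\backslash\ix$, the same per-node lower bound $-1-|\imhx\backslash\imx|/|\imx|$ on the first piece, and the same four single-variable PAC bounds (for $P_H$, $d_p$, $\mathbf{1}_{\ix}$, $\mathbf{1}_{\ixh}$) combined by a union bound after conditioning on $\sx$ and integrating it out. Your two caveats --- that $d_p(x)\leq 2$ holds only when $|\imhx\backslash\imx|\leq|\imx|$ (e.g.\ under a single-identified-match-per-node assumption, otherwise the upper endpoint of the range must be enlarged), and that dividing by $p^{+}(\mathcal{X},\sxp,\mathbf{1}_{\ix},0,1,\delta)$ preserves the inequality only when the bracketed numerator is nonnegative --- are both correct and are points the paper's own proof passes over silently.
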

\begin{IEEEproof}
	Note that 
	\begin{eqnarray}
	P & = & \frac{1}{|\ix|}\sum_{x\in\ix}\frac{|\imx\cap\amx|}{|\imx|}\\
	& = & \frac{1}{|\ix|}\left(\sum_{x\in\ix}\frac{|\imx\cap\amx|}{|\imx|}-\sum_{x\in\ixh}\frac{|\imhx\cap\amx|}{|\imhx|}\right)\nonumber \\
	&  & +\frac{|\ixh|}{|\ix|}\frac{1}{|\ixh|}\sum_{x\in\ixh}\frac{|\imhx\cap\amx|}{|\imhx|}.
	\end{eqnarray}
	The first term can be decomposed into three parts
	\begin{eqnarray}
	&  & \sum_{x\in\ix}\frac{|\imx\cap\amx|}{|\imx|}-\sum_{x\in\ixh}\frac{|\imhx\cap\amx|}{|\imhx|}\\
	& = & \sum_{x\in\ix\cap\ixh}\left(\frac{|\imx\cap\amx|}{|\imx|}-\frac{|\imhx\cap\amx|}{|\imhx|}\right)\nonumber \\
	&  & +\sum_{x\in\ix-\ixh}\frac{|\imx\cap\amx|}{|\imx|}\nonumber \\
	& & -\sum_{x\in\ixh-\ix}\frac{|\imhx\cap\amx|}{|\imhx|}\\
	& = & \mbox{\rom{1}}+\mbox{\rom{2}}+\mbox{\rom{3}}.
	\end{eqnarray}

	Since $|\ix-\ixh|$ and $|\ixh-\ix|$ are usually
	small, we can simply use 0 and $-|\ixh-\ix|$ as the lower
	bound for $\mbox{\rom{2}}$ and $\mbox{\rom{3}}$. For $\mbox{\rom{1}}$,
	we have 
	\begin{eqnarray}
	\mbox{\rom{1}} & \geq & \sum_{\substack{x\in\ix\cap\ixh\\
			\imx\neq\imhx
		}
	}\bigg(\frac{|\imhx\cap\amx|}{|\imx|}-\frac{|\imhx\cap\amx|}{|\imhx|}\nonumber \\
	&  & -\frac{|(\imhx-\imx)\cap\amx|}{|\imx|}\bigg)\\
	& \geq & \sum_{\substack{x\in\ix\cap\ixh\\
			\imx\neq\imhx
		}
	}\left(-1-\frac{|\imhx-\imx|}{|\imx|}\right).
	\end{eqnarray}

	Therefore, 
	\begin{eqnarray}
	&  & P \nonumber \\
	& \geq & -\frac{1}{|\ix|}\nonumber \\
	& & \left[\sum_{\substack{x\in\ix\cap\ixh\\
			\imx\neq\imhx
		}
	}\left(1+\frac{|\imhx-\imx|}{|\imx|}\right)+|\ixh-\ix|\right]\nonumber \\
	&  & +\frac{|\ixh|}{|\ix|}\frac{1}{|\ixh|}\sum_{x\in\ixh}\frac{|\imhx\cap\amx|}{|\imhx|}\\
	& = & \frac{|X|}{|\ix|}\bigg\{-\frac{1}{|X|}\sum_{x\in X}\bigg[\mathbf{1}_{\ix\cap\ixh}(x)\mathbf{1}_{\{\imx\neq\imhx\}}\nonumber \\
	& & \left(1+\frac{|\imhx-\imx|}{|\imx|}\right)+\mathbf{1}_{\ixh-\ix}(x)\bigg] \nonumber \\
	&  & +\frac{|\ixh|}{|X|}\frac{1}{|\ixh|}\sum_{x\in\ixh}\frac{|\imhx\cap\amx|}{|\imhx|}\bigg\}
	\end{eqnarray}

	Note that 
	\begin{equation}
	P_H =\frac{1}{|\ixh|}\sum_{x\in\ixh}\frac{|\imhx\cap\amx|}{|\imhx|}.
	\end{equation}
	By Theorem \ref{thm:query_holdout}, we have a lower
	bound for the precision of the holdout algorithm with probability
	at least 1-$\delta$,
	\begin{equation}
	P_H \geq p^{-}\left(\ixh,\sx\cap\ixh,p_{H},0,1,\delta\right).\label{eq:query_precision1}
	\end{equation}

	Since $\sxp$ is independent of $\sx$, $\sxp$ is also a sample
	selected uniformly at random without replacement from $X$
	conditional on $\sx$. Note that $\imx$ is a fixed function conditional
	on $\sx$, therefore we can use $\sxp$ to validate $\frac{1}{|X|}\sum_{x\in X}d_{p}(x)$
	conditional on $\sx$, where $0\leq d_{p}(x)\leq3$:
	\begin{equation}
	\prob{\frac{1}{|X|}\sum_{x\in X}d_{p}(x)\leq p^{+}\left(X,\sxp,d_{p},0,3,\delta\right)\vert\sx}>1-\delta.
	\end{equation}
	We derive the unconditional bound by integrating out $\sx$:
	
	$ $
	\begin{equation}
	\prob{\frac{1}{|X|}\sum_{x\in X}d_{p}(x)\leq p^{+}\left(X,\sxp,d_{p},0,3,\delta\right)}>1-\delta.\label{eq:query_precision2}
	\end{equation}

	Since 
	\be
	\frac{|\ix|}{|X|}=\frac{1}{|X|}\sum_{x\in X}\mathbf{1}_{\ix}(x)
	\ee
	and 
	\be
	\frac{|\ixh|}{|X|}=\frac{1}{|X|}\sum_{x\in X}\mathbf{1}_{\ixh}(x),
	\ee
	using the bounds from Section \ref{sec:bounds}, we have PAC bounds
	\begin{eqnarray}
	\prob{\frac{|\ix|}{|X|}\leq p^{+}\left(X,\sxp,\mathbf{1}_{\ix},0,1,\delta\right)} & > & 1-\delta,\label{eq:query_precision3}\\
	\prob{\frac{|\ixh|}{|X|}\geq p^{-}\left(X,\sxp,\mathbf{1}_{\ixh},0,1,\delta\right)} & > & 1-\delta.\label{eq:query_precision4}
	\end{eqnarray}

	Combining (\ref{eq:query_precision1}), (\ref{eq:query_precision2}),
	(\ref{eq:query_precision3}) and (\ref{eq:query_precision4}) by the
	union bound we get the result stated in the theorem. \end{IEEEproof}

\section{Extensions} \label{sec:extensions}
This section outlines a few ways to extend the results from the previous sections. One extension provides different confidence levels for matches identified by different algorithms or identified by a single algorithm that provides match scores as well as matches. Other extensions bound matching error rate, instead of precision or recall, and validate matching between data fields, instead of nodes in different networks. 

\subsection{Confidence Levels for Different Types of Matches}
The methods described in previous sections validate a single matching algorithm. In some applications, multiple matching algorithms are applied, producing multiple sets of algorithm matches. Similarly, some network reconciliation algorithms return a score for each match. The score is designed to indicate a level of similarity between the nodes in a matched pair. In this case, the range of scores may be partitioned, and scores in different parts of the range can be placed in different sets of algorithm matches.

Use union bounds to produce simultaneous PAC bounds for precision or recall over the different sets of algorithm matches. For example, if there are two sets of algorithm matches, and we validate the recall of one to be at least 80\% and the other to be at least 90\%, each with bound failure probability at most 2.5\%, then those recall rates both hold with probability at least 100\% - $2(2.5\%)$ = 95\%. Simultaneous bounds support different levels of confidence for different sets of identified matches, which may affect how we choose to handle the matches in applications. For example, in an application to merge contact information based on matching user profiles of contacts across social networks, if we have only 80\% precision for a set of identified matches, then we may choose to ask the user to verify matches from that set. If we have 99\% precision from a different set, we may choose to merge those matches automatically, without asking the user for confirmation. 

\subsection{Validation of Error Rates}
The validation methods for precision and recall can be easily extended to other measures of accuracy for matching algorithms. For example, for complete algorithms, we can define single-node error for node $x \in X$ as
\be
w(x) = \mathbf{1}_{\imx \not= \amx},
\ee
meaning that the algorithm has an error on a node if it either fails to identify all matches for the node or identifies any false matches for it. The error rate for the algorithm can be defined as the average of single-node errors over all nodes in $X$. As we did for precision and recall rates, we can validate the error rate by using a sample of nodes with verified actual matches to bound the error rate for a holdout algorithm and using a sample of nodes to validate the rate of disagreement between the holdout algorithm and the algorithm developed on all available data. 

\subsection{Entity Resolution}
In some applications, we are given a set of data fields -- for example names, phone numbers, email addresses, and locations -- and some information about links between them -- for example that they are in the same entry of a contact list or that they occur together in a document. The goal is to match pairs of data fields that refer to the same entity or to aggregate data fields by entity. This problem is sometimes called entity resolution \cite{bhattacharya07}. We can extend our validation methods to entity resolution algorithms for field matching or aggregation, as follows.

An example of matching pairs of fields is an application that connects an email address to a phone number in case someone wants to reply to an email by phone. For matching pairs of fields, we can set $X = Y$, so that we are matching nodes (fields, in our case) from a set to other nodes in the same set. Do not include pairs consisting of the same node twice in the sets of actual matches $\am$ or algorithm matches $\im$, since we want to validate the matching of different fields to each other. 

An example of field aggregation is an application that extracts fields from free text and structured data, then aggregates those fields for each of a set of people or organizations. For these matching algorithms, we can use $X$ as the set of entities and $Y$ as the set of fields. In general, each $x \in X$ will match to multiple $y \in Y$. In many applications, it is also possible for multiple $x \in X$ to match a single $y \in Y$. For example, multiple people may share a single home phone number and address. 

\section{Sampling Procedure for Validation Over a Finite Population} \label{sec:sampling}
Our validation methods use samples $\sx$, drawn uniformly at random without replacement from $X$. In some applications, we wish to use only a subset of our labeled data as validation data, so we can also use some labeled data to develop the holdout algorithm. If we simply partition a sample into a validation set to be withheld and a training set to develop the holdout algorithm, then the holdout algorithm is not developed independently of the selection of the validation subsample, as required for our bounds to be valid, because the training subsample specifically excludes the validation subsample. Instead, we must allow a chance for some intersection between subsamples so that they are distributed as if the training and validation samples were drawn separately, and each draw was uniformly at random without replacement from $X$. 

Let $L \subseteq X$ be a sample of $X$ drawn uniformly at random without replacement for which we have verified the true matches. Select a validation subsample size $s$ and a training subsample size $t$ such that $s + t = |L|$. Generate holdout development (training) subsample $D$ (with $|D|=t$) and validation subsample $S$ (with $|S|=s$) as follows:
\begin{enumerate}
\item Select $t$ samples uniformly at random without replacement from $L$. These samples are $D$.
\item Generate intersection size $i$ from a hypergeometric distribution with population size $|X|$, number of success states in the population $t$
and number of draws $s$. (So $\prob{i} = \frac{{t \choose i}{{|X| -t} \choose {s-i}}}{{{|X|} \choose s}}$.)
\item Select $i$ samples uniformly at random without replacement from $D$.
\item Select $s-i$ samples uniformly at random without replacement from $L- D$.
\item The validation subsample $S$ consists of the samples from Steps 3 and 4.
\end{enumerate}
Let $D'$ and $S'$ denote two independent samples of size $t$ and $s$, respectively, drawn uniformly at random without replacement from the population $X$. Then the subsamples $D$ and $S$ generated by the above procedure have the same distribution as $D'$ and $S'$. This is because the size of the intersection between $D$ and $S$ has the same distribution as that between $D'$ and $S'$, and the samples drawn in Steps 1, 3, and 4 have the same distribution as those drawn uniformly at random without replacement from the population. See the appendix for a rigorous proof.

\section{Numerical Results} \label{sec:numbers}
This section presents some numerical results for complete recall for typical parameter values for matching users between social networks. This gives a general sense of bound effectiveness. It also allows us to compare the different methods for bounding actual means based on empirical means: Hoeffding, empirical Bernstein, and hypergeometric (direct). Let $|X| = $ 120 million. Assume that the fraction of $x \in X$ that have a match in $Y$ is $\frac{2}{3}$, so $|X'| = $ 80 million. Assume the rate of disagreement between holdout and complete algorithms, $d_r$, is 0.001. Let $|S'_X|$ = 100,000. (These are samples used to assess the rate of disagreement; their actual matches need not be verified.) Let $\delta = \frac{0.05}{3}$, so that we have a 95\% bound success probability. 

\begin{figure}
  \includegraphics[width=\linewidth]{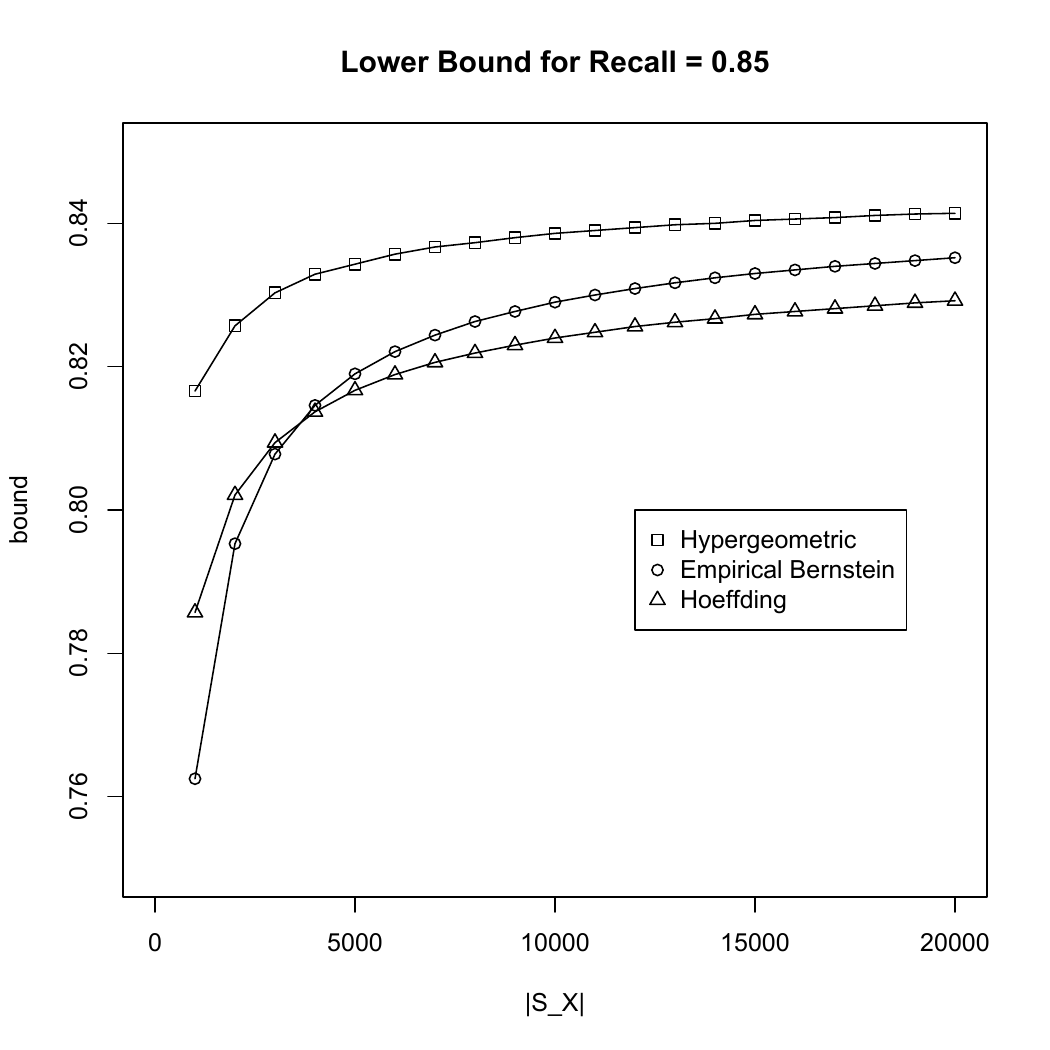}
  \caption{Lower bounds for complete recall using Theorem \ref{thm:complete_recall}.}
  \label{recall_85}
\end{figure}

\begin{figure}
  \includegraphics[width=\linewidth]{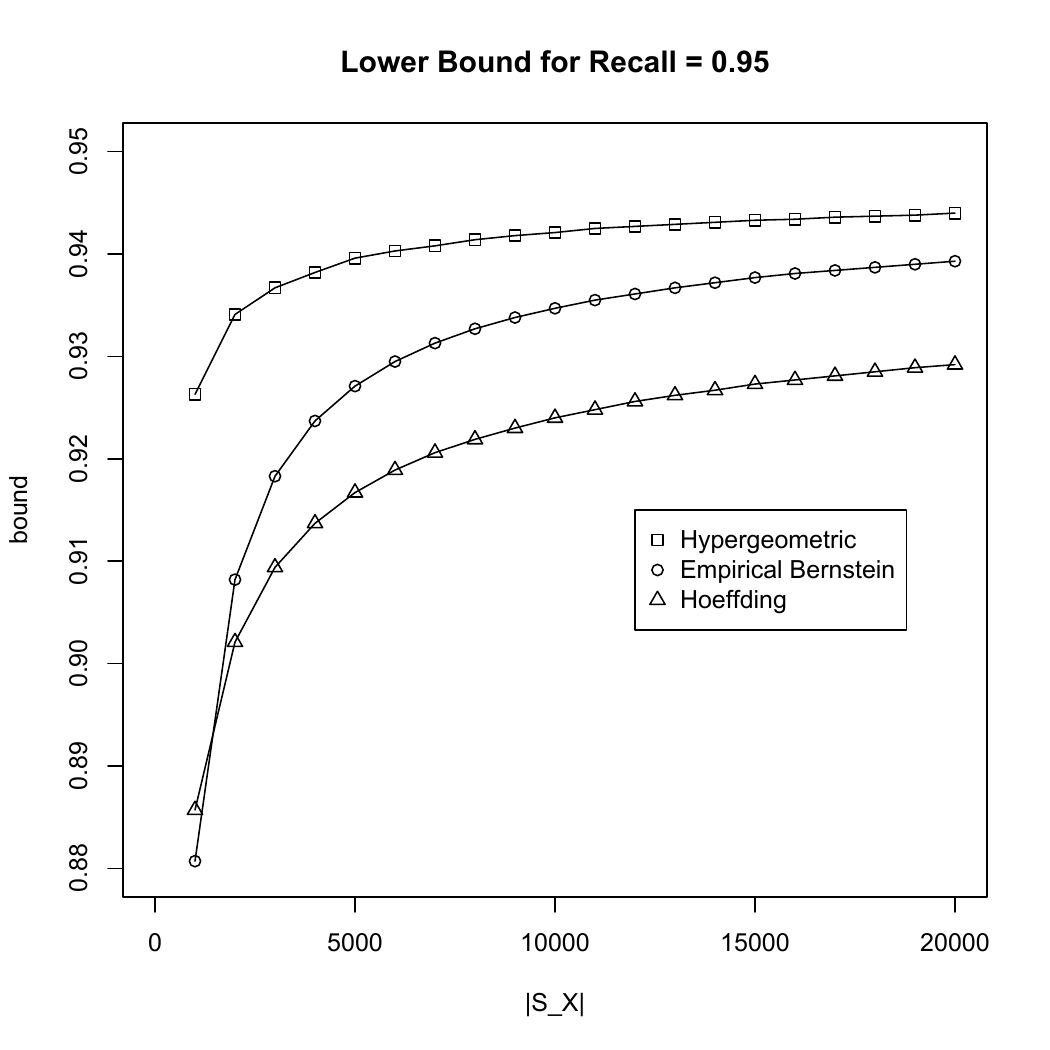}
  \caption{Lower bounds for complete recall using Theorem \ref{thm:complete_recall}.}
  \label{recall_95}
\end{figure}

Figure \ref{recall_85} shows lower bounds for complete recall using Theorem \ref{thm:complete_recall} if the holdout matching algorithm has 85\% recall ($r_H = 0.85$). Using hypergeometric bounds for $p^+()$ and $p^-()$, the lower bound is above 81\% even using $|S_X| = 1000$: a thousand samples from $X$ with verified matches. Hypergeometric bounds outperform the others for all numbers of samples, since they are sharp bounds (up to machine rounding errors and any error from using Stirling's approximation for factorials.) Empirical Bernstein bounds are stronger than Hoeffding bounds for 5000 or more verified matches. 

Figure \ref{recall_95} shows lower bounds for complete recall if the holdout matching algorithm has 95\% recall. With a more accurate matching algorithm, empirical Bernstein bounds begin to outperform Hoeffding bounds even for 2000 verified matches. Even with only 1000 samples with verified matches, hypergeometric bounds produce a lower bound on complete recall of over 92\%. 

\begin{figure}
  \includegraphics[width=\linewidth]{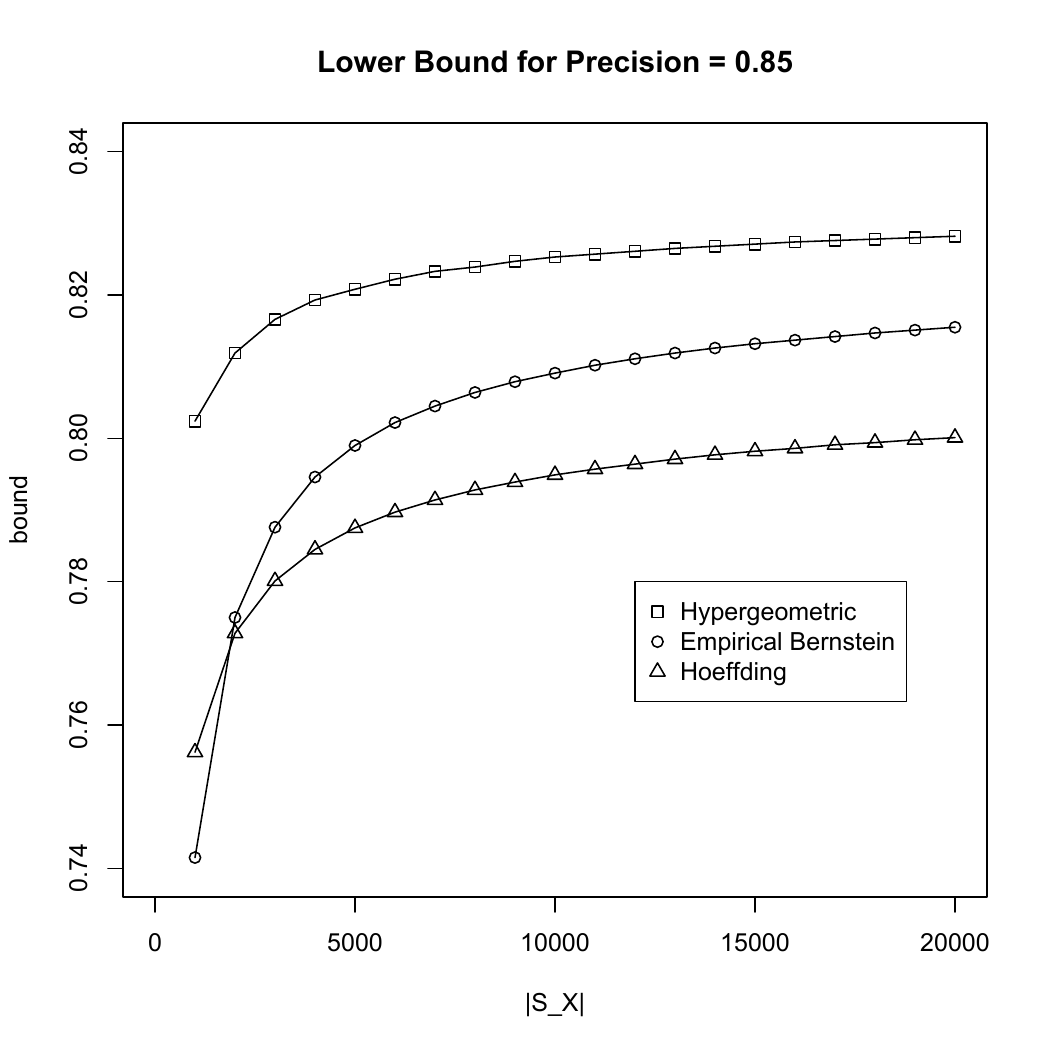}
  \caption{Lower bounds for complete precision using Theorem \ref{thm:complete_precision}.}
  \label{precision_85}
\end{figure}

\begin{figure}
  \includegraphics[width=\linewidth]{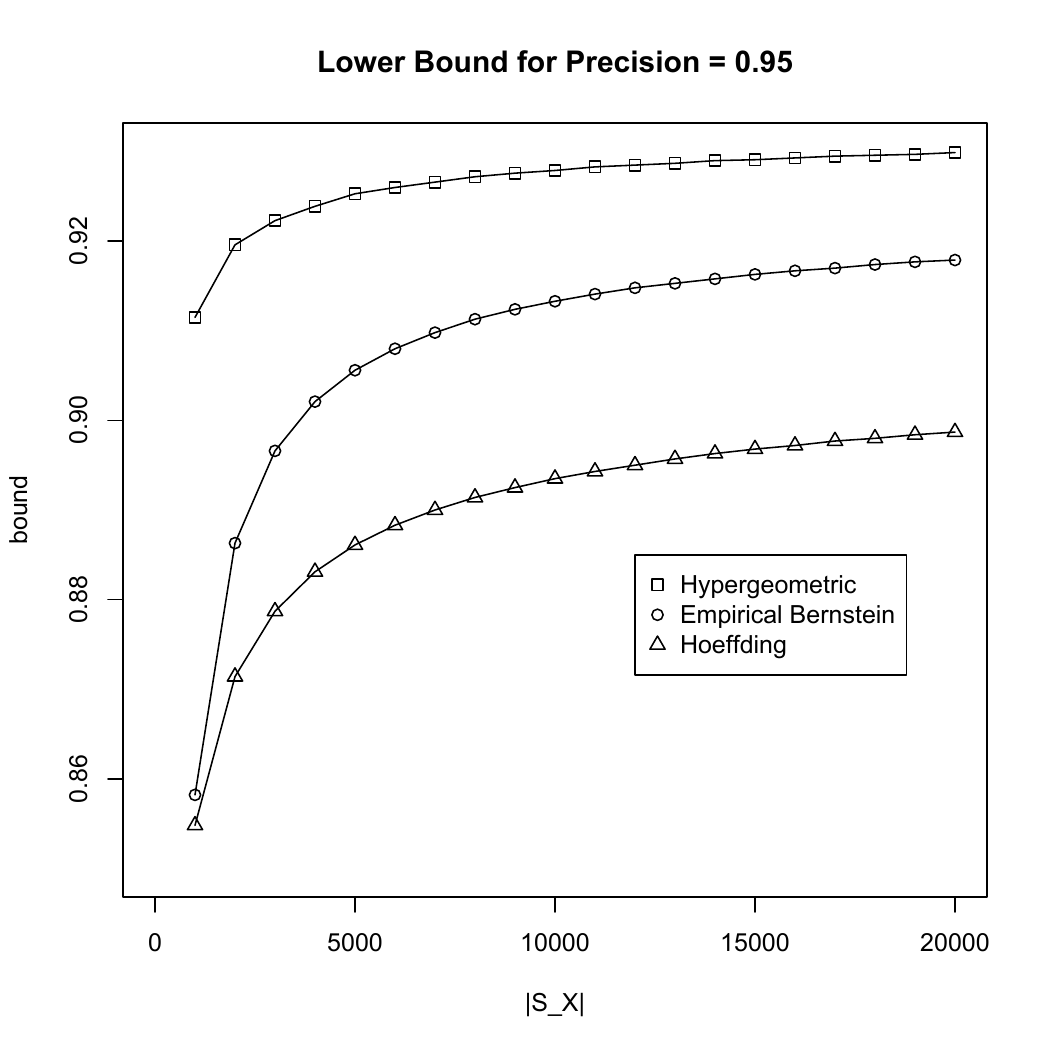}
  \caption{Lower bounds for complete precision using Theorem \ref{thm:complete_precision}.}
  \label{precision_95}
\end{figure}

Figures \ref{precision_85} and \ref{precision_95} show lower bounds for complete precision based on Theorem \ref{thm:complete_precision}. We use the same parameter values as for the recall results in Figures \ref{recall_85} and \ref{recall_95}, except $\delta = \frac{0.05}{4}$, to keep a five percent bound failure probability. We assume there is at most one holdout algorithm match and at most one complete algorithm match for each $x \in X$. We assume both algorithms match $\frac{2}{3}$ of the elements of $X$, and $d_p = 0.003$. The holdout precision rate $p_H$ is 0.85 for Figure \ref{precision_85} and 0.95 for Figure \ref{precision_95}. 

\section{Discussion} \label{sec:conclusion}
This paper introduces methods to validate precision and recall rates for network reconciliation algorithms. The methods apply regardless of the process that generated the networks' link structure and node data. In fact, the methods apply to matching algorithms in general, not just those that match nodes between networks. The methods include validation of matching algorithms that use all available data, so data used for validation may be used for matching as well. 

One direction for future research is to use the validation methods developed in this paper to guide network reconciliation processes. For example, some iterative matching algorithms begin with a seed set of matches and use matches from previous iterations to infer new matches in successive iterations, including methods from \cite{nunes12}, \cite{korula14}, \cite{pedarsani13}, and \cite{yartseva13}. In each iteration, validated precision and recall rates may be useful to determine which pairs to adopt as new matches and how much to rely on them to infer future matches. Pedarsani et al. \cite{pedarsani13} use Bayesian confidence measures in this way, under the assumption that the features used for matching have independent distributions. The validation methods developed in this paper could be used in a similar way when the feature distributions are correlated or unknown. 

Finally, it would be interesting to develop validation methods for network reconciliation in non-transductive settings, where the goal is to assess precision and recall for nodes to be added to the network in the future, whose data and links are not yet known. For these settings, we may need to assume that we know how the network will grow, or at least have a probabilistic model for network growth. Alternatively, we may be able to develop validation methods based on the assumption that soon-to-be-added nodes will be generated by the same distribution as recently-added nodes, even if we do not know the distribution, as in work on algorithm validation using cohorts \cite{bax13}. 


\bibliographystyle{IEEEtran}
\bibliography{IEEEabrv,bax}

\section*{Appendix}

\subsection*{Proof of Subsampling Procedure from Section \ref{sec:sampling}}
\begin{IEEEproof}
Let $|X|=n$, $|D|=t$ and $|S|=s$.
Note that $|L|=t+s$. It suffices to show that 
\begin{eqnarray}
\prob{D} & = & \frac{1}{{n \choose t}},\\
\prob{S\vert D} & = & \frac{1}{{n \choose s}}.
\end{eqnarray}

Since $L$ is selected uniformly at random without replacement
from $X$, we have 
\begin{eqnarray}
\prob{D} & = & \sum_{L}\prob{L}\prob{D\vert L}\\
 & = & \sum_{L\supseteq D}\frac{1}{{n \choose t+s}{t+s \choose t}}\\
 & = & \frac{{n-t \choose s}}{{n \choose t+s}{t+s \choose t}}\\
 & = & \frac{1}{{n \choose t}}.
\end{eqnarray}
This shows that $D$ has the same distribution as a size-$t$
sample selected uniformly at random without replacement from $X$.

The distribution of $S$ conditional on $D$ satisfies

\be
\prob{S\vert D} = \sum_{L}\prob{L\vert D}\prob{S\vert D,L}
\ee
\be
= \sum_{L\supseteq D \cup S}\prob{L\vert D}\prob{i=|S\cap D|}\prob{S\vert N,D,L}
\ee
\be
= \frac{{n-t-s+|S\cap D| \choose |S\cap D|}{t \choose |S\cap D|}{n-t \choose s-|S\cap D|}}{{n-t \choose s}{n \choose s}{t \choose |S\cap D|}{s \choose s-|S\cap D|}}
\ee
\be
\frac{1}{{n \choose s}}.
\ee
Thus $S$ is independent of $D$ and has the same
distribution as a size-$s$ sample selected uniformly at random without
replacement from $X$.
\end{IEEEproof}

\end{document}